\documentclass{article}
\usepackage{ijcai25}

\usepackage{amsmath}
\usepackage{amsthm} 
\newtheoremstyle{thmstyleone} 
  {3pt}          
  {3pt}          
  {\itshape}     
  {}             
  {\bfseries}    
  {.}            
  {.5em}         
  {}             
\newtheorem{theorem}{Theorem}

\newtheorem{lemma}[theorem]{Lemma}

\newtheorem{assump}[theorem]{Assumption}
\newtheorem{proposition}[theorem]{Proposition}

\usepackage{times}
\usepackage{soul}
\usepackage{url}
\usepackage[utf8]{inputenc}
\usepackage[small]{caption}
\usepackage{booktabs}
\usepackage[switch]{lineno}
\usepackage{natbib}
\usepackage[psdextra]{hyperref}

\usepackage{tikz}
\usepackage{pgfplots}
\usepackage{subcaption} 
\pgfplotsset{compat=1.18}

\usepackage{amsmath,amssymb,amsfonts}

\usepackage{algorithm}
\usepackage{algpseudocode}
\usepackage{booktabs}
\usepackage{multirow}
\usepackage{placeins}
\algnewcommand{\Phase}[1]{%
   \State \textbf{#1}%
}
\algnewcommand{\Var}[1]{\texttt{#1}}

\usepackage{graphicx}
\usepackage{textcomp}
\usepackage{xcolor}
\usepackage{multirow}
\usepackage{mathrsfs}
\usepackage{siunitx}  
\usepackage{graphicx}
\usepackage{booktabs}
\usepackage{placeins}

\providecommand{\norm}[1]{\ensuremath{\left\lVert #1 \right\rVert_2}}
\providecommand{\fro}[1]{\ensuremath{\left\lVert #1 \right\rVert_F}}

\DeclareMathOperator{\tr}{tr}

\providecommand{\grad}{\nabla}

\def\BibTeX{{\rm B\kern-.05em{\sc i\kern-.025em b}\kern-.08em
    T\kern-.1667em\lower.7ex\hbox{E}\kern-.125emX}}
    
\hypersetup{colorlinks = true, 
	    linkcolor = black, 
	    urlcolor = blue,
           citecolor = blue} 

\begin{document}

\title{Adversarial LassoNet: Robust Feature Selection via Stability-Driven Sparse Learning}

\author{
Zhen Huang\textsuperscript{\dag},
Peicheng Xu\textsuperscript{\dag},
Junbiao Pang\textsuperscript{*},
Yulong Zheng,
Yuan Chen\\
\affiliations
Beijing University of Technology, The First Affiliated Hospital of Zhejiang University School of Medicine, Peking University First Hospital, \\
\textsuperscript{\dag}Equal contribution. 
\textsuperscript{*}Corresponding author.\\
\emails
huangzhen@emails.bjut.edu.cn,
pcxu45635@gmail.com,
junbiao\_pang@bjut.edu.cn,
1507138@zju.edu.cn,
softsnake@bjmu.edu.cn
}


\maketitle

\begin{abstract}

Sparse feature selection is critical for high-dimensional machine learning, yet traditional $\ell_1$-regularized methods are often brittle under observational noise and spurious correlations, leading to unstable feature supports and degraded generalization. Although adversarial training has been widely used to improve model robustness, its interaction with hierarchical sparse feature selection remains underexplored. In this work, we propose Adversarial LassoNet (AdLNet), a stability-driven sparse feature selection framework that integrates input-space adversarial perturbations with the hierarchical sparsity mechanism of LassoNet. We derive a tractable first-order adversarial approximation under local smoothness assumptions and provide an NTK-inspired spectral analysis to characterize how perturbation-driven training can reduce gradient concentration. Experiments on high-dimensional SERS data, six public benchmark datasets, and ColoredMNIST show that AdLNet maintains competitive sparse-selection performance while improving out-of-distribution robustness by 4.4\% and feature support reproducibility by 6.3\% under nearly matched support sparsity on ColoredMNIST. On the high-dimensional lung cancer screening dataset, AdLNet achieves a 5.3\% test accuracy gain and a 6.0\% AUC improvement over vanilla LassoNet. Code and dataset are available at \url{https://github.com/719573/Adversarial-LassoNet}.
\end{abstract}


\section{Introduction}

Feature selection is a fundamental task in machine learning, particularly in high-dimensional settings where redundant or noisy variables can substantially degrade predictive performance and generalization~\cite{guyon2003feature,lemhadri2021lassonet}. Selecting a compact feature set is especially important when the number of input variables is large and the available sample size is limited~\cite{brahim2023birds}.

Feature selection remains challenging in realistic settings characterized by observational noise, data heterogeneity, and spurious correlations. These issues are closely related to shortcut learning and dataset shift, where models tend to exploit correlations that do not transfer to deployment settings~\cite{geirhos2020shortcut,brown2023detecting,glocker2021causality}. In such settings, some variables may appear predictive because they capture spurious patterns in the training data~\cite{arjovsky2019invariant,sagawa2020groupdro,madry2018robust}. As a result, sparse models optimized only for empirical prediction may select features that fit the training distribution but are unstable under perturbations or distribution shifts. These limitations motivate feature selection methods that are not only predictive, but also stable and robust to spurious correlations.

To address these limitations, we propose Adversarial LassoNet (AdLNet), a stability-driven sparse feature selection framework that integrates local input perturbations into the hierarchical sparsity mechanism of LassoNet. The key idea is to use local adversarial sensitivity as an additional training signal for sparse feature selection. Features that are predictive mainly due to spurious correlations may be more sensitive to local input perturbations, whereas more stable predictive features tend to preserve their contribution under small worst-case input changes. By combining the clean prediction loss with a perturbation-driven stability loss, AdLNet encourages the selected feature subset to be both predictive and locally stable. Importantly, the proposed method preserves the original hierarchical sparse optimization structure of LassoNet and modifies only the training objective through an adversarial stability term.

Our main contributions are threefold:
\begin{itemize}

    \item We propose Adversarial LassoNet (AdLNet), a stability-driven extension of LassoNet that integrates input-space adversarial perturbations into hierarchical sparse feature selection. The proposed objective encourages selected features to be both predictive on clean samples and stable under local worst-case perturbations.

    \item We derive a tractable first-order approximation for local adversarial sensitivity under a local smoothness assumption and incorporate it into the hierarchical proximal optimization pipeline. We further provide an NTK-inspired spectral analysis showing that perturbation-driven training can increase the empirical NTK effective rank when a local spectral growth condition is satisfied.

    \item We conduct experiments on high-dimensional SERS data, six public benchmark datasets, and ColoredMNIST. The results show that AdLNet maintains competitive sparse-selection performance while improving out-of-distribution robustness and feature support reproducibility under spurious correlation shifts.

\end{itemize}

\section{Related Work}

\subsection{Sparse Feature Selection and Stability}
Conventional feature selection methods often rely on data-driven $\ell_1$ regularization. For example, LASSO identifies salient variables by imposing an $\ell_1$ penalty on model coefficients \citep{tibshirani1996lasso}, while subsequent extensions introduce structural constraints to model prior correlations among features \citep{meier2008group}. Building on these linear foundations, recent deep sparse models enable representation learning and feature selection to be performed jointly. LassoNet \citep{lemhadri2021lassonet}, for instance, embeds a global feature selection mechanism into neural network training through hierarchical sparsity constraints. However, these $\ell_1$-based paradigms are primarily optimized for in-distribution predictive accuracy, and the stability of the selected features may therefore degrade under distribution shifts.

To improve the stability of sparse selection, several robust optimization techniques have been explored. In linear settings, Adversarial Lasso \citep{li2021adversarial} integrates adversarial training with $\ell_1$ regularization and uses perturbation constraints to stabilize feature selection. However, its linear formulation limits its applicability to high-dimensional problems with complex nonlinear dependencies. In deep learning, methods such as Robust Sparse Deep Feature Selection \citep{robustsparse2023} reduce model sensitivity by penalizing input-gradient magnitudes. Although such penalties can improve stability to some extent, they are typically isotropic magnitude-based regularizers and may not fully exploit the directional information provided by worst-case perturbations for stable feature selection.

\subsection{Perturbation-Based Robustness}
Beyond explicit feature selection, perturbation-based learning has become an important paradigm for improving model robustness. Techniques such as adversarial training \citep{goodfellow2015adversarial}, input-gradient regularization \citep{ross2018inputgrad, madry2018robust}, and Sharpness-Aware Minimization (SAM) \citep{foret2021sam} encourage local consistency under perturbations and favor flatter, less perturbation-sensitive loss landscapes. Recently, sharpness-aware optimization has also been extended to sparse training; for example, single-step SAM variants have been used to improve the efficiency and accuracy of sparse network training \citep{ji2024ssam}. Perturbation-based robustness is particularly relevant in medical AI, where models are often expected to remain reliable under noise, distribution shifts, and spurious correlations \citep{finlayson2019adversarial, han2021advancing}. 

Nevertheless, many perturbation-based methods primarily target predictor robustness or parameter-level sparsity, such as network pruning, rather than input-level variable selection. As a result, improved robustness of the predictor does not necessarily imply that the selected feature support is concise, interpretable, or reproducible. Although our use of local perturbations is conceptually related to counterfactual explanations \citep{wachter2018counterfactual}, the goal of our method is different: we use local perturbations as a training signal to guide stable sparse feature selection, rather than to generate counterfactual samples or make causal intervention claims.

\section{Adversarial LassoNet}

\subsection{Revisiting LassoNet}
\label{sec:revist_lasso}

We consider a supervised classification problem with a training set $\mathcal{D}=\{(\mathbf{x}_i,y_i)\}_{i=1}^{N}$, where $\mathbf{x}_i\in\mathbb{R}^{d}$ denotes the input feature vector and $y_i\in\{1,\dots,C\}$ denotes the corresponding class label. The objective is to learn a predictor and a sparse subset of informative features that preserve predictive performance.

LassoNet~\cite{lemhadri2021lassonet} combines a nonlinear neural network branch with a linear skip connection from the input to the output. Specifically, the prediction function is formulated as:
\begin{equation}
    f_{\boldsymbol{\theta}}(\textbf{x}) = \boldsymbol{\theta}^\top \textbf{x} + f_\textbf{W}(\textbf{x})
\end{equation}
where $\boldsymbol{\theta} \in \mathbb{R}^{d \times C}$ denotes the weight matrix of the linear skip connection, and $f_\textbf{W}(\textbf{x})$ represents a nonlinear feedforward neural network parameterized by $\textbf{W}$. To impose structured feature sparsity, LassoNet introduces a group-sparsity regularization penalty during the joint optimization of the empirical loss $\mathcal{L}$, while imposing a hierarchical proximal constraint. The core optimization problem can be precisely formulated as the operator $\mathrm{HierProx}_{\eta\lambda,\textbf{M}}$ in LassoNet:

\begin{equation}
\begin{aligned}
\min_{\boldsymbol{\theta}, \mathbf{W}} \quad 
& \mathcal{L}(\boldsymbol{\theta}, \mathbf{W}) 
+ \lambda \sum_{j=1}^{d}\|\boldsymbol{\theta}_j\|_2 \\
\text{s.t.} \quad 
& \|\mathbf{W}_j^{(1)}\|_\infty \le M \|\boldsymbol{\theta}_j\|_2,
\quad j=1,2,\dots,d .
\end{aligned}
\label{eq:lassnet}
\end{equation}

where $\lambda$ is the regularization parameter controlling the level of sparsity, $\mathbf{W}_j^{(1)}$ denotes the weight vector connecting the $j$-th input feature to the first hidden layer of the network, and $M > 0$ is a hyperparameter that governs the relative strength of the linear and nonlinear components. The $\ell_\infty$ constraint ensures a strict hierarchical relationship: an input feature is allowed to possess non-zero weights in the nonlinear branch only if its corresponding linear representative remains active.
Consequently, LassoNet integrates feature selection directly into end-to-end network training, performing strictly structured feature sparsification while learning nonlinear representations.

LassoNet in Eq.~\eqref{eq:lassnet} performs feature selection based solely on empirical predictive performance. However, in high-dimensional yet small-sample regimes, such correlation-driven selection tends to prioritize spurious features that capture incidental correlations in the training distribution. A reasonable assumption is that some spurious features are sensitive to minor input perturbations, yielding unstable and non-reproducible sparse feature supports. To resolve this limitation, we propose a stability-driven method that encourages selected features to be both predictive and robust against worst-case local perturbations.

\subsection{Adversarial Perturbation for Stability}
\label{sec:Adversarial Perturbation}

We introduce the concept of adversarial sensitivity to explicitly measure and subsequently suppress feature instability. The adversarial sensitivity is defined for each sample-label pair $(\textbf{x},y)$ as follows:
\begin{equation}
S_{\rho}(\textbf{x},y;\theta)
:=
\max_{\|r\|_{p}\le \rho}
\left[
\mathcal{L}(f_{\theta}(\textbf{x}+\textbf{r}),y)-\mathcal{L}(f_{\theta}(\textbf{x}),y)
\right],
\label{eq:local_sensitivity}
\end{equation}
where $\rho>0$ is the perturbation radius, $\|\cdot\|_{p}$ denotes the vector $p$-norm, and $\textbf{r} \in \mathbb{R}^d$ denotes a bounded local perturbation. Eq.~\eqref{eq:local_sensitivity} follows the perturbation-based robustness perspective in adversarial training, where worst-case local input perturbations quantify model sensitivity~\cite{goodfellow2015adversarial,madry2018robust,ross2018inputgrad}.

A large value of $S_{\rho}(\textbf{x},y;\theta)$ indicates that the prediction at $\textbf{x}$ is highly sensitive to small worst-case perturbations. In our framework, we operationalize this stability by deriving the worst-case perturbation $\textbf{r}^*$, which acts as a stability-aware training signal to guide the sparsity to prune unstable features. Since solving the exact inner maximization in Eq.~\eqref{eq:local_sensitivity} can be computationally expensive, we approximate it using a local first-order expansion.

Note that we focus on the \(\ell_2\)-bounded case, namely $p=2$. Extensions to other norms follow from the corresponding dual-norm formulation.

\begin{assump}
For fixed $(\theta,y)$, we define the per-example loss 
$\ell(\textbf{x}):=\mathcal{L}(f_{\theta}(\textbf{x}),y)$. 
We assume that $\ell(\textbf{x})$ is differentiable in a neighborhood of $\textbf{x}$ and that its input gradient is locally Lipschitz. Formally, there exists a constant $L_\textbf{x}>0$ such that for all perturbations $\textbf{r}$ satisfying $\|\textbf{r}\|_2\le \rho$,
\begin{equation}
\|\nabla_\textbf{x} \ell(\textbf{x}+\textbf{r})-\nabla_\textbf{x} \ell(\textbf{x})\|_2
\le
L_\textbf{x}\|\textbf{r}\|_2.
\label{eq:lipschitz_gradient}
\end{equation}
\label{assumpion:smooth}
\end{assump}

Assumption~\ref{assumpion:smooth} is well-justified for neural networks~\cite{goodfellow2015adversarial}. Modern activation functions (e.g., ReLU, GELU, sigmoid) are Lipschitz continuous, and their layered composition yields locally Lipschitz input gradients. Under the bounded perturbation constraint \(\|\textbf{r}\|_2 \le \rho\), input $\textbf{x}$ stays within a compact neighborhood, where the loss gradient maintains uniform smoothness. 

Given Assumption~\ref{assumpion:smooth}, the local adversarial sensitivity in Eq.\eqref{eq:local_sensitivity} admits the first-order approximation as follows:
\begin{equation}
S_{\rho}(\textbf{x},y;\theta)
=
\max_{\|\textbf{r}\|_2\le \rho}
\langle \nabla_\textbf{x} \ell(\textbf{x}), \textbf{r}\rangle
+
O(\rho^2).
\label{eq:first_order_sensitivity}
\end{equation}

By the Cauchy--Schwarz inequality, the linearized inner maximization in Eq.~\eqref{eq:first_order_sensitivity} yields both the worst-case local loss increase and the perturbation direction that attains it. Specifically, to maximize the inner product, the optimal perturbation $\textbf{r}^*$ must align with the direction of the input gradient:

\begin{equation}
\begin{split}
\max_{\|\mathbf{r}\|_2\le \rho}
\langle \nabla_\mathbf{x}\ell(\mathbf{x}), \mathbf{r}\rangle
=
\rho\|\nabla_\mathbf{x}\ell(\mathbf{x})\|_2, \\
\mathbf{r}^{*}
=
\rho
\frac{\nabla_\mathbf{x}\ell(\mathbf{x})}
{\|\nabla_\mathbf{x}\ell(\mathbf{x})\|_2},
\quad
\text{if } \nabla_\mathbf{x}\ell(\mathbf{x})\neq 0 .
\end{split}
\label{eq:worst_case_perturbation}
\end{equation}
In implementation, we use the stabilized form
\[
\mathbf{r}^{*}
=
\rho
\frac{\nabla_\mathbf{x}\ell(\mathbf{x})}
{\|\nabla_\mathbf{x}\ell(\mathbf{x})\|_2+\epsilon},
\]
where $\epsilon>0$ is a small numerical constant introduced for stability. A detailed proof is provided in Appendix~\ref{app:local_sensitivity}.

The worst-case perturbation $\boldsymbol{r}^*$ provides a critical stability signal that can be seamlessly integrated into the feature selection mechanism. By embedding this perturbation into the training loop, the framework is explicitly guided to preferentially prune unstable features, encouraging the selected subset to retain features whose predictive contribution is more stable under local perturbations.

\subsection{Stability-Driven Objective and Optimization}

We build a unified stability-driven training objective and preserve the original hierarchical sparse optimization pipeline of LassoNet. We replace the standard ERM loss with the following mixed objective:

\begin{equation}
L_{\mathrm{mix}}
=
(1-\alpha)\,\underbrace{\mathcal{L}(f_{\theta}(\mathbf{x}),y)}_{\text{predictive fidelity}}
+
\alpha\,\underbrace{\mathcal{L}(f_{\theta}(\mathbf{x}+\mathbf{r}^{*}),y)}_{\text{adversarial stability}},
\label{eq:mix_loss}
\end{equation}
where $\alpha\in[0,1]$ controls the balance between clean predictive fidelity and local adversarial stability. Eq.~\eqref{eq:mix_loss} encourages the selection of features that support accurate clean prediction and remain stable under worst-case local perturbations. Rather than modifying the structural sparsification mechanism of LassoNet, our method retains the original hierarchical proximal scheme~\cite{lemhadri2021lassonet} and refines the training signal. Sparse feature selection is therefore shaped indirectly through stability-aware optimization, rather than through a separate feature-wise stability score.

\textbf{Optimization: }The resulting objective is both non-convex and non-smooth because it combines a deep neural parameterization with the hierarchical sparsity-inducing regularization inherited from LassoNet. We optimize it by alternating between adversarial perturbation construction in the input space and hierarchical proximal updates in the parameter space. Concretely, for each mini-batch, we first compute the input-gradient-based perturbation in Eq.~\eqref{eq:worst_case_perturbation}. During the outer parameter update, $\mathbf{r}^{*}$ is treated as a fixed perturbation constructed from the current mini-batch, following standard first-order adversarial training practice. We then evaluate the mixed objective~\eqref{eq:mix_loss} on clean and perturbed samples, and apply the standard gradient and hierarchical proximal updates. The overall procedure is summarized in Algorithm~\ref{alg:adv_lassonet}.

\begin{small}
\begin{algorithm}[t]
\caption{Stability-driven optimization for LassoNet}
\label{alg:adv_lassonet}
\begin{algorithmic}[1]
\Require Training data $\mathcal{D}$, perturbation radius $\rho$, mixing coefficient $\alpha$, learning rate $\eta$, path regularization parameter $\lambda$, hierarchy parameter $M$, and numerical constant $\epsilon$
\State Initialize model parameters $\theta$
\For{each sparsity level $\lambda$ along the regularization path}
    \For{each mini-batch $\{(\textbf{x}_i,y_i)\}_{i=1}^{B}$}
        \State Construct perturbations $\textbf{r}_i^*$ from Eq.~\eqref{eq:worst_case_perturbation}
             \State Form perturbed inputs $\mathbf{x}_i^{\mathrm{adv}}=\mathbf{x}_i+\mathbf{r}_i^*$
            \State Compute the perturbed loss $\mathcal{L}(f_{\theta}(\mathbf{x}_i^{\mathrm{adv}}),y_i)$
        \State Compute mixed loss $L_{\mathrm{mix}}$ from Eq.~\eqref{eq:mix_loss}
        
        \State Take a gradient step on the regularized objective
        \State Apply the hierarchical proximal operator $\mathrm{HierProx}_{\eta\lambda,M}$
    \EndFor
\EndFor
\State \Return sparse feature support and trained predictor
\end{algorithmic}
\end{algorithm}
\end{small}

\subsection{A Local Spectral Condition to Increase Effective Rank}
\label{sec:local_effrank_condition}

The stability-driven objective in Eq.~\eqref{eq:mix_loss}, optimized by Algorithm~\ref{alg:adv_lassonet}, is designed to improve the stability of selected feature supports by incorporating local adversarial perturbations into sparse learning. However, the geometric mechanism behind this stability-driven training effect remains to be characterized. To provide such an interpretation, we turn to the empirical Neural Tangent Kernel (NTK)~\cite{jacot2018ntk}, which captures the similarity of parameter gradients across training samples. Under a local first-order perturbation model, we analyze how adversarial perturbations can affect the spectral distribution of the empirical NTK. In particular, we show that the effective rank of the NTK can increase when the perturbation-induced spectral correction satisfies a positive local growth condition, corresponding to a less concentrated empirical gradient geometry.

Let \(\boldsymbol{\Theta} \in \mathbb{R}^{N \times N}\) denote a non-vanishing symmetric positive semi-definite empirical NTK. We define its effective rank as
\begin{equation}
r_{\mathrm{eff}}(\boldsymbol{\Theta}) =
\frac{\operatorname{tr}(\boldsymbol{\Theta})^2}
{\|\boldsymbol{\Theta}\|_F^2}.
\label{eq:eff_rank_def}
\end{equation}
where \(\operatorname{tr}(\boldsymbol{\Theta})\) is the sum of its eigenvalues, and \(\|\boldsymbol{\Theta}\|_F^2\) equals the sum of squared eigenvalues. The ratio \(\operatorname{tr}(\boldsymbol{\Theta})^2/\|\boldsymbol{\Theta}\|_F^2\) quantifies the uniformity of eigenvalue distribution: it reaches the true rank when eigenvalues are perfectly equal, and decreases as eigenvalue concentration grows. Therefore, the metric in Eq.~\eqref{eq:eff_rank_def} quantifies the intrinsic dimension of a matrix through the uniformity of its eigenvalue distribution and is widely adopted in the analysis of overparameterized models~\cite{bartlett2020benign}.

For perturbed inputs, we assume the empirical NTK admits a local first-order Taylor expansion around the clean kernel $\boldsymbol{\Theta}$:
\begin{equation}
\label{eq:ntk_local_taylor}
\begin{gathered}
\widetilde{\boldsymbol{\Theta}}(\rho) = \boldsymbol{\Theta} + \rho \boldsymbol{E} + \boldsymbol{R}(\rho), \\
\text{where} \quad \|\boldsymbol{R}(\rho)\|_F = o(\rho) ,
\end{gathered}
\end{equation}
with $\boldsymbol{E}$ being a symmetric first-order correction matrix, $\boldsymbol{R}(\rho)$ denoting the higher-order residual term of the Taylor expansion, and $o(\rho)$ representing the little-o notation. We assume $\widetilde{\boldsymbol{\Theta}}(\rho) \succeq 0$ holds for sufficiently small $\rho > 0$, and let $\langle \boldsymbol{A}, \boldsymbol{B} \rangle = \operatorname{tr}(\boldsymbol{A}^\top \boldsymbol{B})$ denote the Frobenius inner product throughout.

\begin{lemma}
\label{lem:eff-rank-expansion}
Under the above conditions, the effective rank of the perturbed NTK admits the following first-order Taylor expansion with respect to $\rho$:
\begin{equation}
r_{\mathrm{eff}}(\widetilde{\boldsymbol{\Theta}}(\rho))
=
r_{\mathrm{eff}}(\boldsymbol{\Theta})
+
2\,r_{\mathrm{eff}}(\boldsymbol{\Theta})
\underbrace{\left[
\frac{\operatorname{tr}(\boldsymbol{E})}{\operatorname{tr}(\boldsymbol{\Theta})}
-
\frac{\langle \boldsymbol{\Theta}, \boldsymbol{E}\rangle}{\|\boldsymbol{\Theta}\|_F^2}
\right]}_{G(\boldsymbol{E};\boldsymbol{\Theta})}\rho
+
o(\rho).
\label{eff_rank_expansion}
\end{equation}
\end{lemma}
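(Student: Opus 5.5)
The plan is to expand the numerator and denominator of $r_{\mathrm{eff}}(\widetilde{\boldsymbol{\Theta}}(\rho)) = \operatorname{tr}(\widetilde{\boldsymbol{\Theta}}(\rho))^2/\|\widetilde{\boldsymbol{\Theta}}(\rho)\|_F^2$ separately to first order in $\rho$, using the expansion in Eq.~\eqref{eq:ntk_local_taylor}, and then combine them by a first-order expansion of the quotient. Write $T=\operatorname{tr}(\boldsymbol{\Theta})$ and $F=\|\boldsymbol{\Theta}\|_F^2$. Since $\boldsymbol{\Theta}$ is nonzero, we have $F>0$. Since $\boldsymbol{\Theta}$ is positive semi-definite and nonzero, its trace is the sum of nonnegative eigenvalues, not all zero, so $T>0$. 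Hence $r_{\mathrm{eff}}(\boldsymbol{\Theta})=T^2/F$ is well defined and finite, and the ratios $\operatorname{tr}(\boldsymbol{E})/T$ and $\langle\boldsymbol{\Theta},\boldsymbol{E}\rangle/F$ in $G(\boldsymbol{E};\boldsymbol{\Theta})$ make sense.

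\textbf{Numerator.} By linearity of the trace and $|\operatorname{tr}(\boldsymbol{R})|\le \sqrt{N}\,\|\boldsymbol{R}\|_F$ (Cauchy--Schwarz against the identity), we get $\operatorname{tr}(\widetilde{\boldsymbol{\Theta}}(\rho)) = T + \rho\operatorname{tr}(\boldsymbol{E}) + o(\rho)$. Squaring gives
\[
\operatorname{tr}(\widetilde{\boldsymbol{\Theta}})^2 = T^2\bigl(1+2\rho\operatorname{tr}(\boldsymbol{E})/T\bigr)+o(\rho).
\]

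\textbf{Denominator.} Expand $\|\boldsymbol{\Theta}+\rho\boldsymbol{E}+\boldsymbol{R}\|_F^2$ via the Frobenius inner product:
\[
F + 2\rho\langle\boldsymbol{\Theta},\boldsymbol{E}\rangle + 2\langle\boldsymbol{\Theta},\boldsymbol{R}\rangle + \|\rho\boldsymbol{E}+\boldsymbol{R}\|_F^2 .
\]
By Cauchy--Schwarz, $|\langle\boldsymbol{\Theta},\boldsymbol{R}\rangle|\le\|\boldsymbol{\Theta}\|_F\|\boldsymbol{R}\|_F=o(\rho)$, and the last term is $O(\rho^2)$. Therefore
\[
\|\widetilde{\boldsymbol{\Theta}}\|_F^2=F\bigl(1+2\rho\langle\boldsymbol{\Theta},\boldsymbol{E}\rangle/F\bigr)+o(\rho).
\]

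\textbf{Quotient.} Because $F>0$, the denominator stays bounded away from zero for small $\rho$. We can then use $1/(1+u)=1-u+O(u^2)$ with $u=2\rho\langle\boldsymbol{\Theta},\boldsymbol{E}\rangle/F+o(\rho)/F = O(\rho)$. Multiplying the two expansions and discarding $O(\rho^2)$ and $o(\rho)$ cross terms gives
\[
\frac{T^2}{F}\Bigl(1+2\rho\Bigl[\tfrac{\operatorname{tr}(\boldsymbol{E})}{T}-\tfrac{\langle\boldsymbol{\Theta},\boldsymbol{E}\rangle}{F}\Bigr]\Bigr)+o(\rho),
\]
which is Eq.~\eqref{eff_rank_expansion}. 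Equivalently, one can differentiate $\phi(\rho)=\log r_{\mathrm{eff}}=2\log\operatorname{tr}-\log\|\cdot\|_F^2$ at $\rho=0$, which gives the same bracket directly.

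The only delicate point is controlling the residual $\boldsymbol{R}(\rho)$. The assumption is only $\|\boldsymbol{R}(\rho)\|_F=o(\rho)$, with no differentiability of $\boldsymbol{R}$, so the argument must avoid differentiating $\boldsymbol{R}$. It suffices to bound every term involving $\boldsymbol{R}$ by norm inequalities as above. The positivity $T>0$ from positive semi-definiteness is what licenses dividing by $\operatorname{tr}(\boldsymbol{\Theta})$ in $G(\boldsymbol{E};\boldsymbol{\Theta})$. The assumption $\widetilde{\boldsymbol{\Theta}}(\rho)\succeq0$ is not needed for the expansion itself.
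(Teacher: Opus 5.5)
Your proposal is correct and follows essentially the same route as the paper's proof: you expand $\operatorname{tr}(\widetilde{\boldsymbol{\Theta}}(\rho))$ and $\|\widetilde{\boldsymbol{\Theta}}(\rho)\|_F^2$ separately to first order and combine them with a first-order quotient expansion, which produces the bracket $G(\boldsymbol{E};\boldsymbol{\Theta})$. Your explicit residual bounds ($|\operatorname{tr}(\boldsymbol{R})|\le\sqrt{N}\|\boldsymbol{R}\|_F$ and $|\langle\boldsymbol{\Theta},\boldsymbol{R}\rangle|\le\|\boldsymbol{\Theta}\|_F\|\boldsymbol{R}\|_F$), together with deriving $\operatorname{tr}(\boldsymbol{\Theta})>0$ from positive semi-definiteness, make rigorous steps that the paper asserts without justification.
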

The term $\operatorname{tr}(\boldsymbol{E})/\operatorname{tr}(\boldsymbol{\Theta})$ in~\eqref{eff_rank_expansion} measures the trace-normalized spectral mass added by the perturbation, whereas $\langle \boldsymbol{\Theta}, \boldsymbol{E}\rangle / \|\boldsymbol{\Theta}\|_F^2$ measures how strongly the correction aligns with the energy already concentrated in the clean NTK. Thus, the effective rank increases when the perturbation contributes relatively more mass to weak or underrepresented directions than to the dominant kernel directions.

Lemma~\ref{lem:eff-rank-expansion} shows that the first-order variation of the effective rank is governed by $G(\boldsymbol{E};\boldsymbol{\Theta})$, which we refer to as the spectral growth score.

\begin{proposition}[Local condition for effective-rank increase]
\label{prop:local-eff-rank-increase}
If
\[
G(\boldsymbol{E};\boldsymbol{\Theta}) > 0,
\]
then there exists $\rho_0 > 0$ such that for all $0 < \rho < \rho_0$,
\[
r_{\mathrm{eff}}(\widetilde{\boldsymbol{\Theta}}(\rho)) > r_{\mathrm{eff}}(\boldsymbol{\Theta}).
\]
\end{proposition}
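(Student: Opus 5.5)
The plan is to deduce the proposition directly from Lemma~\ref{lem:eff-rank-expansion}. We take that lemma as given. It says that the difference $\Delta(\rho):=r_{\mathrm{eff}}(\widetilde{\boldsymbol{\Theta}}(\rho))-r_{\mathrm{eff}}(\boldsymbol{\Theta})$ can be written as
\[
\Delta(\rho)=2\,r_{\mathrm{eff}}(\boldsymbol{\Theta})\,G(\boldsymbol{E};\boldsymbol{\Theta})\,\rho+\phi(\rho),
\qquad \phi(\rho)/\rho\to 0 \text{ as } \rho\to 0^+ .
\]
The whole argument is then a sign argument: the linear term has a strictly positive coefficient, and it dominates the $o(\rho)$ remainder for small $\rho$.

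First, we check that the leading coefficient $c:=2\,r_{\mathrm{eff}}(\boldsymbol{\Theta})\,G(\boldsymbol{E};\boldsymbol{\Theta})$ is strictly positive.
\begin{itemize}
\item By hypothesis, $G>0$.
\item $\boldsymbol{\Theta}$ is symmetric positive semi-definite and non-vanishing, so it has at least one positive eigenvalue and all its eigenvalues are nonnegative.
\item Hence $\operatorname{tr}(\boldsymbol{\Theta})>0$ and $\|\boldsymbol{\Theta}\|_F^2>0$, so $r_{\mathrm{eff}}(\boldsymbol{\Theta})=\operatorname{tr}(\boldsymbol{\Theta})^2/\|\boldsymbol{\Theta}\|_F^2>0$.
\item The same positivity makes $G$ itself well defined, since both of its denominators are nonzero.
\end{itemize}
Therefore $c>0$.

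Second, we control the remainder using the definition of $o(\rho)$. Take $\varepsilon=c/2$. Then there is $\rho_0>0$ such that $|\phi(\rho)|\le (c/2)\rho$ for all $0<\rho<\rho_0$. We may shrink $\rho_0$ if needed so that it also lies within the range where the hypotheses hold, namely $\widetilde{\boldsymbol{\Theta}}(\rho)\succeq 0$ and the expansion of Lemma~\ref{lem:eff-rank-expansion} is valid. For such $\rho$,
\[
\Delta(\rho)\ge c\rho-(c/2)\rho=(c/2)\rho>0,
\]
which is exactly the claim.

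There is no genuine obstacle here; the only point needing care is that $r_{\mathrm{eff}}(\widetilde{\boldsymbol{\Theta}}(\rho))$ is well defined for small $\rho$. Continuity of the expansion gives $\widetilde{\boldsymbol{\Theta}}(\rho)\to\boldsymbol{\Theta}$ in Frobenius norm. Consequently $\|\widetilde{\boldsymbol{\Theta}}(\rho)\|_F>0$ for $\rho$ small, and we fold this requirement into the choice of $\rho_0$ as well. With that settled, the proposition follows immediately from the lemma.
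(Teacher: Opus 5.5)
Your proposal is correct and follows essentially the same route as the paper: invoke Lemma~\ref{lem:eff-rank-expansion}, note that $r_{\mathrm{eff}}(\boldsymbol{\Theta})>0$ for a non-vanishing PSD kernel so the linear coefficient is strictly positive, and let that term dominate the $o(\rho)$ remainder for small $\rho$. Your explicit choice $\varepsilon=c/2$ and your check that $r_{\mathrm{eff}}(\widetilde{\boldsymbol{\Theta}}(\rho))$ is well defined make precise what the paper only states in words.
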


\begin{proof} From Lemma~\ref{lem:eff-rank-expansion}, the change in effective rank is given by

\begin{equation}
\Delta r_{\mathrm{eff}}
=
2 r_{\mathrm{eff}}(\boldsymbol{\Theta})
G(\boldsymbol{E};\boldsymbol{\Theta})\rho
+
o(\rho).
\end{equation}
Since $r_{\mathrm{eff}}(\boldsymbol{\Theta}) > 0$ for a non-vanishing NTK, the condition $G(\boldsymbol{E};\boldsymbol{\Theta}) > 0$ implies the linear coefficient is strictly positive. By the definition of the little-o notation, the linear term strictly dominates the remainder $o(\rho)$ for a sufficiently small $\rho > 0$, guaranteeing $r_{\mathrm{eff}}(\widetilde{\boldsymbol{\Theta}}(\rho)) > r_{\mathrm{eff}}(\boldsymbol{\Theta})$.
\end{proof}

The theoretical condition $G(\boldsymbol{E};\boldsymbol{\Theta})>0$ derived above provides a local mechanism for effective-rank elevation. 
Interested readers may refer to Appendix~\ref{app:effective_rank} for details.

\section{Experiments}

\subsection{Experimental Setup}
\label{sec:exp_setup}

\textbf{Datasets.}
We conduct experiments on three groups of datasets: the Surface-Enhanced Raman Spectroscopy (SERS) lung cancer screening dataset, six public benchmark datasets, and the ColoredMNIST benchmark.

\begin{itemize}
    \item \textbf{SERS lung cancer screening dataset.}
    The SERS dataset is used to evaluate sparse feature selection in a high-dimensional medical classification scenario. It contains 87 samples, including 47 positive and 40 negative cases, and each sample consists of 1,264 spectral variables. The dataset is randomly split into training, validation, and test sets with a ratio of 70\%/10\%/20\%. This setting is consistent with recent Raman spectroscopy studies, where machine-learning models are used to classify disease-related spectra and identify diagnostically informative spectral patterns~\cite{kothari2021raman,conti2023raman}. All spectral variables are standardized using the statistics of the training set.

    \item \textbf{Public benchmark datasets.}
    We further evaluate predictive performance on six widely used public benchmark datasets, including MNIST~\cite{lecun1998gradient}, MNIST-Fashion~\cite{xiao2017fashion}, ISOLET~\cite{fanty1991spoken}, COIL-20~\cite{nene1996coil20}, Activity~\cite{anguita2013public}, and Mice Protein~\cite{henschel2015mice}. These datasets cover image recognition, speech recognition, human activity recognition, and biological classification tasks, providing diverse feature distributions and data complexities for evaluating sparse feature selection. For image datasets, pixel values are normalized to $[0,1]$; for non-image datasets, continuous features are standardized using training-set statistics.

    \item \textbf{ColoredMNIST benchmark.}
    To evaluate robustness under spurious correlation shifts, we adopt ColoredMNIST~\cite{sagawa2020distributionally}, a controlled benchmark in which color is spuriously correlated with class labels. Following standard protocols, the training environments are constructed with color-label correlations of $\{0.1,0.2,0.3\}$, and the shifted test environment is used to evaluate out-of-distribution robustness and feature support reproducibility.
\end{itemize}

\textbf{Experimental settings.}
All experiments are implemented in PyTorch 1.8.0 with CUDA 11.1 and conducted on a workstation equipped with an Intel(R) i7-7700 CPU @ 3.60GHz, 32 GB RAM, and an NVIDIA GeForce RTX 3080 GPU. All models are trained from scratch without pretraining. We use stochastic gradient descent (SGD) with momentum $0.9$, and the learning rate is set to $10^{-3}$. Batch sizes are selected according to dataset scale, ranging from $128$ to $512$. The network configurations and training epochs follow the corresponding baseline settings. Unless otherwise specified, experiments are repeated over five independent random seeds.

Vanilla LassoNet~\cite{lemhadri2021lassonet} is used as the primary baseline because it shares the same hierarchical sparsity mechanism as AdLNet but does not include adversarial stability regularization. Therefore, the comparison between AdLNet and vanilla LassoNet directly reflects the effect of the proposed stability-driven objective. On the six public benchmark datasets, we further compare with FISTA-Net~\cite{xiang2021fistanet} and Deep-Lasso~\cite{cherepanova2023performance}, which represent optimization-driven sparse learning and gradient-regularized deep feature selection, respectively.

For the six public benchmark datasets, all methods are evaluated under the matched sparse feature budget of $k=50$ following the original LassoNet regularization-path protocol~\citep{lemhadri2021lassonet}. Specifically, we vary the sparsity regularization parameter $\lambda$ and report the result at the checkpoint satisfying the target feature budget. For the SERS dataset, the main result is reported under the constraint $n_{\mathrm{selected}}<300$ to retain sufficient spectral coverage for medical classification, and an additional controlled sparse-budget analysis is conducted to examine the influence of feature-count differences. For ColoredMNIST, methods are compared under nearly matched support sparsity, so that robustness and reproducibility improvements cannot be attributed to selecting more features.

The adversarial mixing coefficient $\alpha$ controls the trade-off between predictive fidelity and perturbation-driven stability. To avoid data leakage, $\alpha$ is selected using a validation-only protocol. The candidate values are $\{0,0.2,0.4,0.6,0.8,1.0\}$ and are evaluated only on the validation set. The model checkpoint with the best validation accuracy is then evaluated on the held-out test set. Other key hyperparameters, including the hierarchy parameter $M$ and the sparsity regularization path, are selected using the same validation protocol. For AdLNet, adversarial perturbations are generated using the first-order approximation derived in Section~\ref{sec:Adversarial Perturbation}, and optimization follows Algorithm~\ref{alg:adv_lassonet}.

\textbf{Evaluation Metrics.}
We evaluate the proposed method from three complementary perspectives: predictive performance, robustness and support reproducibility under distribution shift, and optimization-geometry diagnostics.

For the six public benchmark datasets, we report classification accuracy under the matched sparse feature budget. For the SERS medical dataset, we additionally report Sensitivity (Sens), Specificity (Spec), and area under the ROC curve (AUC), which provide a more comprehensive evaluation of binary medical classification performance.

For ColoredMNIST, we report in-distribution (ID) accuracy, out-of-distribution (OOD) accuracy, and the ID--OOD generalization gap to assess robustness under spurious correlation shifts. Feature support reproducibility is quantified using the Jaccard index across independent runs, measuring the consistency of selected feature subsets under different random initializations.

For training-dynamics analysis, we compute empirical NTK effective rank, dominant eigenvalue ratio, spectral energy concentration, and Hessian curvature statistics. These metrics are used as diagnostic evidence to examine whether stability-driven training reduces spectral concentration and sharp curvature in sparse learning. Detailed metric definitions and calculation procedures are provided in Appendix~\ref{app:metrics}.

\subsection{Predictive Performance under Sparse Feature Selection}

We first evaluate whether the proposed stability-driven objective can preserve predictive performance under sparse feature selection. This evaluation is conducted on the high-dimensional SERS medical dataset and six public benchmark datasets. The SERS experiment examines the behavior of AdLNet in a small-sample, high-dimensional regime, while the public benchmark experiments further test whether the method remains effective across diverse data domains under matched feature budgets.

Table~\ref{tab:sers_main} reports the results on the SERS lung cancer screening dataset. Compared with vanilla LassoNet, AdLNet achieves better test performance under validation-selected sparse checkpoints, particularly in terms of test accuracy and AUC. This result suggests that introducing adversarial stability into the sparse learning objective can improve out-of-sample generalization in the high-dimensional medical setting, where prediction-only sparse fitting may be more sensitive to unstable correlations in limited training data.

To further examine whether this improvement simply comes from selecting more features, we conduct a controlled sparse-budget analysis in Table~\ref{tab:sers_controlled_budget}. Under a high-sparsity regime ($k \approx 120$), AdLNet slightly underperforms vanilla LassoNet, indicating that excessive stability pressure may remove some predictive but perturbation-sensitive features and lead to underfitting. In contrast, under a moderate-sparsity regime ($k \approx 250$), AdLNet achieves higher test accuracy than vanilla LassoNet. These results reveal an important trade-off between sparsity and stability: adversarial perturbation is most beneficial when the feature budget is sufficient to retain stable predictive features, while overly restrictive sparsity may limit its advantage.

Table~\ref{tab:benchmark_main} further reports the results on six public benchmark datasets under the matched feature budget of $k=50$. AdLNet achieves the best test accuracy on MNIST, MNIST-Fashion, COIL-20, and Activity, and remains competitive on ISOLET and Mice Protein. Therefore, the proposed objective does not sacrifice sparse-selection performance on standard benchmark datasets. Together, the SERS and public benchmark results demonstrate that AdLNet maintains competitive predictive performance under sparse feature selection, while offering improved generalization in the high-dimensional medical setting.

\begin{table}[htbp]
\centering
\caption{Feature selection performance on the SERS medical dataset. The reported checkpoints are selected by validation accuracy under the constraint $n_{\mathrm{selected}}<300$.}
\label{tab:sers_main}
\small
\begin{tabular}{lcc}
\toprule
Metric & LassoNet & AdLNet \\
\midrule
Selected Features & 119 & 256 \\
Validation Accuracy (\%) & \textbf{75.7} & 73.6 \\
Test Accuracy (\%) & 72.1 & \textbf{76.4} \\
Sensitivity (\%) & \textbf{85.7} & 71.4 \\
Specificity (\%) & 77.8 & \textbf{82.5} \\
AUC & 0.775 & \textbf{0.835} \\
\bottomrule
\end{tabular}
\end{table}

\begin{table}[htbp]
\centering
\caption{Test accuracy (\%) on public benchmark datasets under the matched sparse-selection protocol. Results are averaged over five random seeds with target feature budget $k=50$.}
\label{tab:benchmark_main}
\resizebox{\columnwidth}{!}{
\begin{tabular}{lcccc}
\toprule
Dataset & Vanilla LassoNet & FISTA-Net & Deep-Lasso & AdLNet \\
\midrule
Mice Protein           & 90.2 $\pm$ 2.2 & \textbf{94.2 $\pm$ 2.7} & 91.2 $\pm$ 1.6 & 91.9 $\pm$ 2.2 \\
MNIST          & 88.5 $\pm$ 0.9 & 89.0 $\pm$ 11.3 & 86.0 $\pm$ 1.6 & \textbf{89.4 $\pm$ 0.9} \\
MNIST-Fashion  & 77.3 $\pm$ 1.0 & 76.3 $\pm$ 5.7 & 78.9 $\pm$ 0.6 & \textbf{79.0 $\pm$ 1.7} \\
ISOLET         & 82.0 $\pm$ 1.1 & \textbf{85.4 $\pm$ 2.9} & 82.7 $\pm$ 1.6 & 83.9 $\pm$ 1.0 \\
COIL-20        & 98.5 $\pm$ 0.9 & 96.5 $\pm$ 2.7 & 98.3 $\pm$ 0.8 & \textbf{99.2 $\pm$ 0.6} \\
Activity       & 92.9 $\pm$ 0.4 & 92.6 $\pm$ 0.8 & 93.0 $\pm$ 0.2 & \textbf{94.5 $\pm$ 0.4} \\
\bottomrule
\end{tabular}
}
\end{table}

\subsection{Robustness and Support Reproducibility under Spurious Correlation Shifts}

We next evaluate whether the proposed adversarial stability objective improves robustness and feature support reproducibility under spurious correlation shifts. To this end, we conduct experiments on ColoredMNIST, a controlled benchmark in which color is spuriously correlated with class labels during training but becomes unreliable in the shifted test environment. This setting directly tests whether a sparse feature-selection method relies on shortcut-sensitive features or selects features that remain predictive under distribution shift.

Table~\ref{tab:coloredmnist_main} reports the results. Compared with vanilla LassoNet, AdLNet achieves better out-of-distribution (OOD) accuracy while maintaining nearly identical support sparsity ($0.969$ versus $0.966$). This indicates that the robustness gain is not caused by selecting a denser feature subset. Instead, this improvement suggests that the stability-driven training signal introduced by adversarial perturbation contributes to better OOD robustness. Vanilla LassoNet performs sparse selection mainly according to prediction loss on the training distribution, and therefore may retain shortcut-sensitive features that exploit the color-label correlation. In contrast, AdLNet incorporates local worst-case perturbations into the hierarchical sparsity framework, encouraging the proximal updates to preserve features whose predictive contribution is more stable under local input perturbations.

In addition to OOD robustness, AdLNet also improves feature support reproducibility, as reflected by the higher Jaccard index across independent runs. This suggests that adversarial stability regularization makes the selected feature subsets less sensitive to random initialization and spurious training correlations. Overall, the ColoredMNIST results support our core hypothesis that integrating adversarial perturbation into sparse feature selection can improve both robustness and support reproducibility under spurious correlation shifts.

\begin{table}[htbp]
\centering
\caption{ColoredMNIST results under nearly matched support sparsity. ID and OOD accuracies are evaluated against the true labels. Gap denotes the ID--OOD accuracy difference, and Jaccard measures support reproducibility across independent runs.}
\label{tab:coloredmnist_main}
\resizebox{\columnwidth}{!}{
\begin{tabular}{lccccc}
\toprule
Method & ID Acc (\%) & OOD Acc (\%) & Gap (\%) & Support Sparsity & Jaccard \\
\midrule
LassoNet & 84.6 $\pm$ 0.6 & 47.3 $\pm$ 4.9 & 37.3 $\pm$ 4.3 & 0.969 & 0.456 \\
AdLNet   & \textbf{85.5 $\pm$ 1.0} & \textbf{51.7 $\pm$ 2.7} & \textbf{33.8 $\pm$ 1.7} & 0.966 & \textbf{0.519} \\
\bottomrule
\end{tabular}
}
\end{table}

\subsection{Spectral Diagnostics of Optimization Geometry}

We further analyze the empirical NTK and Hessian spectra to examine whether stability-driven training changes the optimization geometry of sparse learning. The local analysis in Section~\ref{sec:local_effrank_condition} suggests that adversarial perturbation can increase the effective rank of the empirical NTK when the first-order spectral growth condition holds. Since this condition is derived for local infinitesimal perturbations, while our experiments compare trained models, the following analysis should be interpreted as an empirical diagnostic rather than a direct proof of the local condition.

To quantify the macroscopic redistribution of NTK spectral mass, we compute the following empirical spectral-redistribution score:
\[
\hat{G}_{\mathrm{macro}}
=
\frac{\mathrm{tr}(\boldsymbol{\Theta}_{\rho}-\boldsymbol{\Theta}_{0})}{\mathrm{tr}(\boldsymbol{\Theta}_{0})}
-
\frac{\langle \boldsymbol{\Theta}_{0},\boldsymbol{\Theta}_{\rho}-\boldsymbol{\Theta}_{0}\rangle}{\|\boldsymbol{\Theta}_{0}\|_F^2},
\]
where
\[
\boldsymbol{\Theta}_{0}=J(\mathbf{X};\boldsymbol{\theta}_{0})J(\mathbf{X};\boldsymbol{\theta}_{0})^\top,
\qquad
\boldsymbol{\Theta}_{\rho}=J(\mathbf{X};\boldsymbol{\theta}_{\rho})J(\mathbf{X};\boldsymbol{\theta}_{\rho})^\top.
\]
Here, $\boldsymbol{\theta}_{0}$ denotes the vanilla LassoNet checkpoint, and $\boldsymbol{\theta}_{\rho}$ denotes the AdLNet checkpoint. Both empirical NTKs are evaluated on the same clean inputs $\mathbf{X}$, so that the comparison reflects differences induced by the learned parameter configurations rather than by different input samples. A positive value of $\hat{G}_{\mathrm{macro}}$ suggests that stability-driven training redistributes spectral mass toward less dominant NTK directions at the macroscopic level.

In our experiment, we obtain $\hat{G}_{\mathrm{macro}}=0.121$, indicating a positive empirical spectral-redistribution trend. As shown in Table~\ref{tab:spectral_summary}, AdLNet produces a less concentrated NTK spectrum than vanilla LassoNet, with a higher effective rank and lower dominant-eigenvalue energy concentration. This suggests that adversarial stability regularization suppresses excessively dominant optimization directions and encourages a more distributed gradient geometry under sparse learning.

A similar trend is observed in the Hessian spectrum. Compared with vanilla LassoNet, AdLNet yields a smoother local curvature profile with weaker sharp directions, indicating improved local optimization stability under perturbation-aware sparse learning. Together, the NTK and Hessian diagnostics are consistent with the spectral interpretation in Section~\ref{sec:local_effrank_condition}: stability-driven training tends to reduce spectral concentration and increase the effective rank of the empirical NTK. However, these results should be viewed as finite-sample diagnostic evidence rather than a strict verification of the local condition $G(\boldsymbol{E};\boldsymbol{\Theta})>0$.

\begin{table}[t]
\centering
\caption{NTK and Hessian spectral diagnostics on ColoredMNIST. The NTK metrics characterize spectral concentration and effective rank, while the Hessian metrics characterize the local curvature profile.}
\label{tab:spectral_summary}
\resizebox{\columnwidth}{!}{
\begin{tabular}{llcc}
\toprule
Spectrum & Metric & LassoNet & AdLNet \\
\midrule
\multirow{5}{*}{NTK}
& Trace $\mathrm{Tr}(\Theta)$                              & 15018.42 & 10219.46 \\
& Effective Rank $\uparrow$                                & 7.07     & \textbf{8.05} \\
& $\lambda_1 / \mathrm{Tr}(\Theta) \downarrow$             & 0.343    & \textbf{0.310} \\
& Top-10 Energy Ratio $\downarrow$                         & 0.740    & \textbf{0.722} \\
& Spectral Growth Coefficient $\hat{G}_{\mathrm{macro}} \uparrow$ & -- & \textbf{0.121} \\
\midrule
\multirow{3}{*}{Hessian}
& Maximum Eigenvalue $\lambda_{\max} \downarrow$           & 14.08    & \textbf{10.13} \\
& Trace $\mathrm{Tr}(H) \downarrow$                        & 50.10    & \textbf{42.66} \\
& Average Curvature $\bar{\kappa} \downarrow$              & 0.805    & \textbf{0.726} \\
\bottomrule
\end{tabular}
}
\end{table}

\subsection{Component-wise Ablation Study}

We conduct ablation experiments on the Mice Protein dataset to examine the contribution of each component in AdLNet. Specifically, we compare the full model with vanilla LassoNet and several partial variants, including LassoNet with only the stability-driven loss, LassoNet with adversarial perturbation, and LassoNet with a gradient-norm penalty replacing directional adversarial perturbation.

As shown in Table~\ref{tab:ablation}, the full AdLNet model achieves the best overall performance, with the highest test accuracy ($92.0\%$) and the highest feature support consistency measured by the Jaccard index ($0.71$). Compared with the partial variants that use only the stability-driven loss or only adversarial perturbation, the full model achieves better predictive performance and support reproducibility. This indicates that the observed improvement cannot be explained by either component alone, but arises from their joint integration within the hierarchical sparsity framework.

Replacing the directional adversarial perturbation with a gradient-norm penalty also degrades performance and introduces additional computational cost. This result suggests that the key benefit of AdLNet comes from explicitly probing locally vulnerable input directions, rather than simply imposing a generic smoothness regularization. In other words, the directional perturbation provides a more informative stability signal for guiding sparse feature selection.

Overall, the ablation results demonstrate that robust prediction and reproducible feature selection rely on the joint effect of perturbation-based stability regularization and hierarchical sparse support formation.

\begin{table}[t]
\centering
\caption{Ablation study on Mice Protein. The full model achieves the best test accuracy and support consistency, while the gradient-norm penalty baseline incurs the largest computational cost.}
\label{tab:ablation}
\resizebox{\columnwidth}{!}{
\begin{tabular}{lccccc}
\toprule
Method & Val Acc (\%) & Test Acc (\%) & Jaccard & Selected Ratio (\%) & Time (s)\\
\midrule
LassoNet                    & 89.8 $\pm$ 2.5 & 90.6 $\pm$ 3.7 & 0.66 & 34.8 & $215.3 \pm 1.9$ \\
LassoNet + Stability Loss   & 91.3 $\pm$ 2.7 & 90.4 $\pm$ 2.7 & 0.69 & 34.5 & $305.6 \pm 11.5$ \\
LassoNet + Adversarial      & 90.6 $\pm$ 2.0 & 91.5 $\pm$ 3.6 & 0.67 & 34.8 & $311.8 \pm 20.0$ \\
LassoNet + Gradient Norm    & 90.1 $\pm$ 3.1 & 90.6 $\pm$ 3.6 & 0.65 & 33.8 & $552.8 \pm 60.6$ \\
Full model                  & \textbf{91.5 $\pm$ 1.8} & \textbf{92.0 $\pm$ 1.2} & \textbf{0.71} & 37.6 & $472.4 \pm 5.6$ \\
\bottomrule
\end{tabular}
}
\end{table}

\subsection{Sensitivity to the Mixing Coefficient}
\label{sec:alpha_sensitivity}

We finally analyze the effect of the adversarial mixing coefficient $\alpha$, which controls the balance between predictive fidelity and perturbation-driven stability. As shown in Table~\ref{tab:alpha_sensitivity_full}, AdLNet maintains relatively stable performance across a broad range of $\alpha$ values on most datasets, indicating that the proposed framework is not overly sensitive to a narrowly tuned mixing coefficient.

The results also reflect the trade-off introduced by the mixed objective. A smaller $\alpha$ places more emphasis on the original prediction objective, while a larger $\alpha$ increases the influence of perturbation-driven stability. Therefore, moderate values of $\alpha$ can better balance clean predictive information and stability regularization in many cases. The optimal value of $\alpha$ varies across datasets, suggesting that this balance is data-dependent.

\begin{table}[htbp]
\centering
\caption{Sensitivity analysis of the adversarial mixing coefficient $\alpha$. Validation and test accuracies are reported under different $\alpha$ values. Bold values indicate the best validation accuracy for each dataset.}
\label{tab:alpha_sensitivity_full}
\resizebox{\columnwidth}{!}{%
\begin{tabular}{lcccccccccccc}
\toprule
Value of $\alpha$ & \multicolumn{2}{c}{0} & \multicolumn{2}{c}{0.2} & \multicolumn{2}{c}{0.4} & \multicolumn{2}{c}{0.6} & \multicolumn{2}{c}{0.8} & \multicolumn{2}{c}{1.0} \\
\cmidrule(lr){2-3} \cmidrule(lr){4-5} \cmidrule(lr){6-7} \cmidrule(lr){8-9} \cmidrule(lr){10-11} \cmidrule(lr){12-13}
Dataset & Val & Test & Val & Test & Val & Test & Val & Test & Val & Test & Val & Test \\
\midrule
Mice Protein          & 91.3 & 90.4 & 92.0 & 91.6 & 90.2 & 90.6 & 90.6 & 91.8 & 91.3 & 90.8 & \textbf{92.2} & 90.9 \\
MNIST         & 89.1 & 89.0 & \textbf{89.4} & 89.2 & 88.5 & 88.4 & 88.8 & 88.0 & 88.9 & 88.6 & 87.8 & 87.7 \\
MNIST-Fashion & 78.1 & 78.2 & \textbf{79.3} & 79.1 & 78.8 & 79.4 & 79.0 & 79.6 & 79.0 & 78.7 & 78.5 & 79.1 \\
ISOLET        & 83.2 & 83.2 & 84.4 & 84.2 & 83.4 & 83.5 & \textbf{84.6} & 83.7 & 82.7 & 82.5 & 82.6 & 82.1 \\
COIL-20       & 99.0 & 99.4 & \textbf{99.6} & 99.2 & 98.8 & 99.7 & 98.5 & 99.4 & 98.8 & 99.2 & 98.9 & 99.2 \\
Activity      & 98.1 & 92.2 & 98.0 & 93.3 & 98.2 & 93.4 & \textbf{98.4} & 93.5 & 98.2 & 93.5 & 98.1 & 93.5 \\
\bottomrule
\end{tabular}
}
\end{table}

\section{Conclusion}
\label{sec:conclusion}

This paper presents AdLNet, a stability-driven sparse feature selection framework that integrates local adversarial perturbations into the hierarchical sparsity mechanism of LassoNet. By combining predictive relevance with local input stability, AdLNet encourages selected features to remain informative under both clean inputs and local worst-case perturbations, while preserving the original sparse optimization structure. We derive a first-order approximation for the adversarial perturbation term and provide an NTK-inspired spectral analysis to characterize the effect of stability-driven training on sparse learning. Empirical results on high-dimensional SERS data, six public benchmark datasets, and ColoredMNIST show that AdLNet maintains competitive sparse-selection performance while improving robustness under spurious correlation shifts and enhancing feature support reproducibility. NTK and Hessian diagnostics further provide finite-sample evidence consistent with the proposed spectral interpretation, suggesting that the observed robustness improvements are associated with reduced spectral concentration and a milder local curvature profile.

The current study focuses on feature-level sparse selection within the LassoNet framework and analyzes stability mainly from the perspective of local input perturbations. This setting allows us to clearly examine the interaction between hierarchical sparsity and adversarial stability, but broader forms of sparse learning remain to be explored. Future work will investigate whether the same stability-driven principle can be extended to other sparsity structures, such as weight sparsity and attention sparsity, as well as to more diverse distribution-shift scenarios.

\appendix
\section*{Appendix}
\addcontentsline{toc}{section}{Appendix}
\section{First-Order Approximation of Local Adversarial Sensitivity}
\label{app:local_sensitivity}
This appendix provides a rigorous error bound for the first-order approximation used to construct local adversarial perturbations in the main text. For fixed parameters and label $(\theta, y)$, define the loss function as
\[
\ell(\textbf{x}) := \mathcal{L}(f_{\theta}(\textbf{x}), y).
\]
Under Assumption 1 in the main text, $\ell(\textbf{x})$ is differentiable in a neighborhood of $\textbf{x}$, and the input gradient is locally Lipschitz continuous. That is, there exists $L_\textbf{x} > 0$ such that for all perturbations $r$ with $\norm{r} \le \rho$,
\[
\norm{\grad_\textbf{x} \ell(\textbf{x}+\textbf{r}) - \grad_\textbf{x} \ell(\textbf{x})} \le L_\textbf{x} \norm{\textbf{r}}.
\]

\subsection{First-Order Expansion and Residual Bound}
By the fundamental theorem of calculus, the change in loss can be written exactly as an integral of the gradient:
\[
\ell(\boldsymbol{x}+\boldsymbol{r}) - \ell(\boldsymbol{x}) 
= \int_0^1 \bigl\langle \nabla_{\boldsymbol{x}} \ell(\boldsymbol{x}+t\boldsymbol{r}), \boldsymbol{r} \bigr\rangle \mathrm{d}t
\]
Inserting $\nabla_{\boldsymbol{x}} \ell(\boldsymbol{x})$ inside the integral, we decompose the difference into a first-order linear term and a remainder $R_2(\boldsymbol{x},\boldsymbol{r})$:
\[
\begin{aligned}
\ell(\boldsymbol{x}+\boldsymbol{r}) - \ell(\boldsymbol{x})
&= \bigl\langle \nabla_{\boldsymbol{x}} \ell(\boldsymbol{x}), \boldsymbol{r} \bigr\rangle \\
&\quad + \underbrace{\int_0^1 \bigl\langle \nabla_{\boldsymbol{x}} \ell(\boldsymbol{x}+t\boldsymbol{r}) - \nabla_{\boldsymbol{x}} \ell(\boldsymbol{x}), \boldsymbol{r} \bigr\rangle \mathrm{d}t}_{R_2(\boldsymbol{x},\boldsymbol{r})}.
\end{aligned}
\]
Taking absolute value and applying the Cauchy–Schwarz inequality together with the Lipschitz condition:
\begin{align*}
|R_2(\boldsymbol{x},\boldsymbol{r})|
&\le \int_0^1 \norm{\nabla_{\boldsymbol{x}} \ell(\boldsymbol{x}+t\boldsymbol{r}) - \nabla_{\boldsymbol{x}} \ell(\boldsymbol{x})} \norm{\boldsymbol{r}} \mathrm{d}t \\
&\le \int_0^1 L_{\boldsymbol{x}} t \norm{\boldsymbol{r}}^2 \mathrm{d}t
= \frac{L_{\boldsymbol{x}}}{2} \norm{\boldsymbol{r}}^2.
\end{align*}
Since $\|\boldsymbol{r}\| \le \rho$, the residual is bounded by $\frac{L_{\boldsymbol{x}}}{2}\rho^2$.

\subsection{Local Adversarial Sensitivity}
The local adversarial sensitivity is defined as the maximal loss increase within the $\rho$-ball:
\begin{align*}
S_{\rho}(\boldsymbol{x},y;\theta)
:= \max_{\norm{\boldsymbol{r}}\le\rho} [\ell(\boldsymbol{x}+\boldsymbol{r})-\ell(\boldsymbol{x})]\\
= \max_{\norm{\boldsymbol{r}}\le\rho} \big[\langle \grad_x \ell(\boldsymbol{x}), \boldsymbol{r} \rangle + R_2(\boldsymbol{x},\boldsymbol{r})\big].
\end{align*}
By Cauchy–Schwarz, the first-order term is maximized when $\boldsymbol{r}$ is aligned with $\nabla_{\boldsymbol{x}} \ell(\boldsymbol{x})$, giving $\rho\norm{\grad_\textbf{x} \ell(\textbf{x})}$. Combined with the residual bound, the maximum absolute error is $O(\rho^2)$. Hence the first-order approximation
\[
S_{\rho}(\textbf{x},y;\theta)
= \rho \norm{\grad_\textbf{x} \ell(\textbf{x})} + O(\rho^2)
\]
holds rigorously.

\section{First-Order Spectral Analysis of Effective Rank}
\label{app:effective_rank}
We derive the first-order Taylor expansion of the effective rank of the perturbed empirical NTK. Let $\boldsymbol{\Theta}\in\mathbb{R}^{N\times N}$ be a nonzero symmetric positive-semidefinite matrix. The effective rank is
\[
r_{\mathrm{eff}}(\boldsymbol{\Theta})
= \frac{\tr(\boldsymbol{\Theta})^2}{\fro{\boldsymbol{\Theta}}^2}.
\]
Consider the perturbed empirical NTK
\[
\widetilde{\boldsymbol{\Theta}}(\rho) = \boldsymbol{\Theta} + \rho \boldsymbol{E} + \boldsymbol{R}(\rho),
\qquad
\fro{\boldsymbol{R}(\rho)} = o(\rho),
\]
where $\boldsymbol{E}$ is symmetric and $\widetilde{\boldsymbol{\Theta}}(\rho)\succeq 0$ for sufficiently small $\rho>0$. The Frobenius inner product is $\langle \boldsymbol{A},\boldsymbol{B}\rangle = \tr(\boldsymbol{A}^\top \boldsymbol{B})$.

\begin{lemma}
Let $T=\operatorname{tr}(\boldsymbol{\Theta})>0$ and $S=\|\boldsymbol{\Theta}\|_{\mathrm{F}}^2>0$. The perturbed effective rank satisfies
\[
r_{\mathrm{eff}}(\widetilde{\boldsymbol{\Theta}}(\rho))
= r_{\mathrm{eff}}(\boldsymbol{\Theta})
+ 2\,r_{\mathrm{eff}}(\boldsymbol{\Theta})\left(
\frac{\operatorname{tr}(\boldsymbol{E})}{\operatorname{tr}(\boldsymbol{\Theta})}
- \frac{\langle\boldsymbol{\Theta},\boldsymbol{E}\rangle}{\|\boldsymbol{\Theta}\|_{\mathrm{F}}^2}
\right)\rho + o(\rho).
\]
\end{lemma}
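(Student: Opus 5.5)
The plan is to expand numerator and denominator of $r_{\mathrm{eff}}(\widetilde{\boldsymbol{\Theta}}(\rho))=\operatorname{tr}(\widetilde{\boldsymbol{\Theta}})^2/\|\widetilde{\boldsymbol{\Theta}}\|_F^2$ separately to first order in $\rho$. I will then combine them through a first-order expansion of the quotient. Nonnegativity $\widetilde{\boldsymbol{\Theta}}(\rho)\succeq 0$ plays no role in the computation; only $T>0$ and $S>0$ matter.

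\textbf{Trace.} By linearity of the trace, $\operatorname{tr}(\widetilde{\boldsymbol{\Theta}}(\rho))=T+\rho\operatorname{tr}(\boldsymbol{E})+\operatorname{tr}(\boldsymbol{R}(\rho))$. Since $|\operatorname{tr}(\boldsymbol{R})|\le\sqrt{N}\|\boldsymbol{R}\|_F$ by Cauchy--Schwarz with the identity matrix, the last term is $o(\rho)$. Squaring gives
\[
\operatorname{tr}(\widetilde{\boldsymbol{\Theta}})^2=T^2\Bigl(1+2\rho\tfrac{\operatorname{tr}(\boldsymbol{E})}{T}\Bigr)+o(\rho).
\]

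\textbf{Frobenius norm.} Expanding the Frobenius norm gives
\[
\|\widetilde{\boldsymbol{\Theta}}\|_F^2=S+2\rho\langle\boldsymbol{\Theta},\boldsymbol{E}\rangle+2\langle\boldsymbol{\Theta},\boldsymbol{R}\rangle+\|\rho\boldsymbol{E}+\boldsymbol{R}\|_F^2 .
\]
Here $|\langle\boldsymbol{\Theta},\boldsymbol{R}\rangle|\le\|\boldsymbol{\Theta}\|_F\|\boldsymbol{R}\|_F=o(\rho)$, and $\|\rho\boldsymbol{E}+\boldsymbol{R}\|_F^2\le(\rho\|\boldsymbol{E}\|_F+o(\rho))^2=o(\rho)$. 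Hence
\[
\|\widetilde{\boldsymbol{\Theta}}\|_F^2=S\bigl(1+2\rho\langle\boldsymbol{\Theta},\boldsymbol{E}\rangle/S+o(\rho)\bigr).
\]

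\textbf{Quotient.} Since $S>0$, the denominator factor $1+u(\rho)$ satisfies $u(\rho)=O(\rho)\to0$. Therefore $(1+u)^{-1}=1-u+O(u^2)=1-2\rho\langle\boldsymbol{\Theta},\boldsymbol{E}\rangle/S+o(\rho)$. Multiplying the two expansions and discarding products of order $\rho^2$ and $\rho\cdot o(1)$ gives
\[
r_{\mathrm{eff}}(\widetilde{\boldsymbol{\Theta}}(\rho))=\frac{T^2}{S}\Bigl(1+2\rho\Bigl[\tfrac{\operatorname{tr}(\boldsymbol{E})}{T}-\tfrac{\langle\boldsymbol{\Theta},\boldsymbol{E}\rangle}{S}\Bigr]\Bigr)+o(\rho).
\]
With $r_{\mathrm{eff}}(\boldsymbol{\Theta})=T^2/S$, this is exactly the claimed formula.

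The only step needing care is the remainder bookkeeping. I must make sure the residual $\boldsymbol{R}(\rho)$, which is controlled only in Frobenius norm, yields $o(\rho)$ contributions in both the trace and the inner products. I must also check that the reciprocal expansion is valid, which requires $S>0$ so the denominator stays bounded away from zero for small $\rho$. Both points reduce to the Cauchy--Schwarz bounds above; the rest is routine algebra.
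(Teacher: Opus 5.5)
Your proposal is correct and follows essentially the same route as the paper: you expand the trace and the squared Frobenius norm to first order, then combine them with a first-order quotient expansion using $r_{\mathrm{eff}}(\boldsymbol{\Theta})=T^2/S$. Your version is slightly more careful than the paper's on two points: you justify explicitly, via $|\operatorname{tr}(\boldsymbol{R})|\le\sqrt{N}\|\boldsymbol{R}\|_F$ and Cauchy--Schwarz, that the residual contributes only $o(\rho)$, and you expand $(1+u)^{-1}$ with the $o(\rho)$ terms kept inside. The paper instead asserts the residual bounds and applies its quotient formula to $(A+\rho a)/(B+\rho b)$ with the remainders dropped.
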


\begin{proof}
Using linearity of the trace,
\[
\operatorname{tr}(\widetilde{\boldsymbol{\Theta}}(\rho))
= T + \rho\operatorname{tr}(\boldsymbol{E}) + o(\rho).
\]
For the denominator,
\[
\|\widetilde{\boldsymbol{\Theta}}(\rho)\|_{\mathrm{F}}^2
= \|\boldsymbol{\Theta} + \rho \boldsymbol{E} + \boldsymbol{R}(\rho)\|_{\mathrm{F}}^2
= S + 2\rho\langle\boldsymbol{\Theta},\boldsymbol{E}\rangle + o(\rho),
\]
where cross terms involving $\boldsymbol{R}(\rho)$ are $o(\rho)$. Substituting into the effective rank:
\[
r_{\mathrm{eff}}(\widetilde{\boldsymbol{\Theta}}(\rho))
= \frac{\bigl(T+\rho\operatorname{tr}(\boldsymbol{E})+o(\rho)\bigr)^2}
{S+2\rho\langle\boldsymbol{\Theta},\boldsymbol{E}\rangle+o(\rho)}
= \frac{T^2 + 2\rho T\operatorname{tr}(\boldsymbol{E}) + o(\rho)}
{S + 2\rho\langle\boldsymbol{\Theta},\boldsymbol{E}\rangle + o(\rho)}.
\]
Apply the first-order expansion
\[
\frac{A+\rho a}{B+\rho b}
= \frac{A}{B}\left(1+\rho\frac{a}{A}-\rho\frac{b}{B}\right)+o(\rho)
\]
with $A=T^2$, $a=2T\operatorname{tr}(\boldsymbol{E})$, $B=S$, $b=2\langle\boldsymbol{\Theta},\boldsymbol{E}\rangle$. Recalling $r_{\mathrm{eff}}(\boldsymbol{\Theta})=T^2/S$,
\[
r_{\mathrm{eff}}(\widetilde{\boldsymbol{\Theta}}(\rho))
= r_{\mathrm{eff}}(\boldsymbol{\Theta})
\left(1+\rho\frac{2T\operatorname{tr}(\boldsymbol{E})}{T^2}-\rho\frac{2\langle\boldsymbol{\Theta},\boldsymbol{E}\rangle}{S}\right)+o(\rho).
\]
Simplifying yields the statement of the lemma.
\end{proof}

\section{Evaluation Metrics}
\label{app:metrics}

This appendix defines the metrics used in the experiments and states the preferred direction for each metric.

\paragraph{Selected Feature Count and Sparsity.}
For a selected feature set $A$ and input dimension $d$,
\[
\text{Sparsity}=\frac{|A|}{d}.
\]
Lower sparsity indicates a smaller selected support, provided that predictive performance is maintained.

\paragraph{Jaccard Index.}
For two selected feature sets $A$ and $B$ from independent runs,
\[
J(A,B)=\frac{|A\cap B|}{|A\cup B|}.
\]
Higher values indicate more reproducible feature selection.

\paragraph{ID and OOD Accuracy.}
Classification accuracy is
\[
\text{Accuracy}=\frac{\text{Number of correct predictions}}{\text{Total number of samples}}.
\]
Higher values indicate better predictive performance. ID accuracy is measured on in-distribution data, whereas OOD accuracy is measured under the specified distribution shift.

\paragraph{ID--OOD Gap.}
We quantify robustness under shift as
\[
\text{Gap}=\text{ID Accuracy}-\text{OOD Accuracy}.
\]
Lower values indicate a smaller degradation from ID to OOD evaluation.

\paragraph{Empirical NTK Effective Rank.}
For an empirical NTK matrix $\Theta\in\mathbb{R}^{N\times N}$,
\[
r_{\mathrm{eff}}(\Theta)=\frac{\operatorname{tr}(\Theta)^2}{\|\Theta\|_F^2}.
\]
Higher values indicate a less concentrated NTK spectrum.

\paragraph{Leading Eigenvalue Ratio.}
Let $\lambda_1(\Theta)$ be the largest eigenvalue of $\Theta$. We report
\[
\frac{\lambda_1(\Theta)}{\operatorname{tr}(\Theta)}.
\]
Lower values indicate that the dominant spectral mode accounts for a smaller fraction of total kernel energy.

\paragraph{Top-$k$ Energy Ratio.}
For eigenvalues $\lambda_1(\Theta)\ge \cdots \ge \lambda_N(\Theta)\ge 0$,
\[
\frac{\sum_{i=1}^{k}\lambda_i(\Theta)}{\operatorname{tr}(\Theta)}
\]
measures the fraction of spectral energy in the leading $k$ directions. We use $k=10$. Lower values indicate less concentration in the leading eigendirections.

\paragraph{Hessian Diagnostics.}
Let $H$ be the Hessian of the training objective with respect to model parameters. We report the maximum eigenvalue $\lambda_{\max}(H)$, the trace $\operatorname{tr}(H)$, and the average curvature
\[
\bar{\kappa}=\frac{\operatorname{tr}(H)}{\dim(H)}.
\]
Lower values indicate milder local curvature.

\section{Supplementary Analysis on Controlled Feature Budgets}
\label{app:controlled_budget}

To complement the SERS evaluation in the main text, we report controlled feature-budget results under two sparsity regimes. This analysis separates the effect of adversarial stability from the effect of selecting more features.

\begin{table}[htbp]
\centering
\small
\caption{Controlled feature-budget analysis on the SERS dataset. ``Standard'' denotes vanilla LassoNet, and ``Adversarial'' denotes the proposed stability-driven framework. Absolute changes ($\Delta$) compare the adversarial model with the standard model under the same sparsity regime.}
\label{tab:sers_controlled_budget}
\resizebox{\columnwidth}{!}{
\begin{tabular}{lcccccc}
\toprule
\textbf{Budget / Model} & \textbf{Val Acc} & \textbf{Test Acc} & \textbf{Sens} & \textbf{Spec} & \textbf{AUC} & \textbf{Features} \\
\midrule
\multicolumn{7}{l}{\textit{\textbf{High-Sparsity Regime ($k \approx 120$)}}} \\
Standard (clean)  & 75.71 & 72.14 & 85.71 & 58.57 & 0.775 & 119 \\
Adversarial (sam) & 47.86 & 52.86 & 62.86 & 42.86 & 0.482 & 122 \\
$\Delta$ (sam $-$ clean) & $-27.86$ & $-19.29$ & $-22.86$ & $-15.71$ & $-0.293$ & $+3$ \\
\midrule
\multicolumn{7}{l}{\textit{\textbf{Moderate-Sparsity Regime ($k \approx 250$)}}} \\
Standard (clean)  & 57.14 & 62.14 & 85.71 & 38.57 & 0.672 & 272 \\
Adversarial (sam) & 73.57 & 76.43 & 71.43 & 81.43 & 0.825 & 256 \\
$\Delta$ (sam $-$ clean) & $+16.43$ & $+14.29$ & $-14.29$ & $+42.86$ & $+0.154$ & $-16$ \\
\bottomrule
\end{tabular}
}
\end{table}

Under the high-sparsity budget, the adversarial model selects nearly the same number of features as the standard model but performs worse. Under the moderate-sparsity budget, it improves test accuracy and AUC while selecting fewer features. These results suggest that the SERS gains in the main experiments are not explained by feature-count expansion alone. Instead, the effect of adversarial stability depends on whether the feature budget allows a sufficiently informative and stable support to be retained.

\bibliographystyle{named}
\bibliography{ijcai25}
\end{document}